\newcolumntype{P}[1]{>{\centering\arraybackslash}p{#1}}
\newcolumntype{M}[1]{>{\centering\arraybackslash}m{#1}}
\setlist[itemize]{leftmargin=15pt,topsep=0pt,itemsep=0pt}
\newtheorem{proposition}{Proposition}
\theoremstyle{definition}
\newtheorem{assumption}{Assumption}
\theoremstyle{remark}
\newtheorem{remark}{Remark}
\pgfplotsset{compat=1.9}
\pgfplotsset{every axis/.append style={
    label style={font=\small},
    tick label style={font=\scriptsize},
}}
\definecolor{myred}{RGB}{255,31,91}
\definecolor{mygreen}{RGB}{0,205,108}
\definecolor{myblue}{RGB}{0,154,222}
\definecolor{mypurple}{RGB}{175,88,186}
\definecolor{myyellow}{RGB}{255,198,30}
\definecolor{myorange}{RGB}{242,133,34}
\definecolor{mygray}{RGB}{160,177,186}
\definecolor{mybrown}{RGB}{166,118,29}
\definecolor{myred2}{RGB}{233,0,45}
\definecolor{mygreen2}{RGB}{0,176,0}
\newcommand{\T}{{\mathrm{T}}}
\newcommand{\D}{{\mathrm{D}}}
\begin{document}

\title{Deep Grey-Box Modeling With Adaptive Data-Driven Models\\Toward Trustworthy Estimation of Theory-Driven Models}

\author{\name Naoya Takeishi \email naoya.takeishi@hesge.ch \\
      \addr Geneva School of Business Administration\\University of Applied Sciences and Arts Western Switzerland (HES-SO)
      \AND
      \name Alexandros Kalousis \email alexandros.kalousis@hesge.ch \\
      \addr Geneva School of Business Administration\\University of Applied Sciences and Arts Western Switzerland (HES-SO)
}

\maketitle

\begin{abstract}
The combination of deep neural nets and theory-driven models, which we call \emph{deep grey-box modeling}, can be inherently interpretable to some extent thanks to the theory backbone. Deep grey-box models are usually learned with a regularized risk minimization to prevent a theory-driven part from being overwritten and ignored by a deep neural net. However, an estimation of the theory-driven part obtained by uncritically optimizing a regularizer can hardly be trustworthy when we are not sure what regularizer is suitable for the given data, which may harm the interpretability. Toward a trustworthy estimation of the theory-driven part, we should analyze regularizers' behavior to compare different candidates and to justify a specific choice. In this paper, we present a framework that enables us to analyze a regularizer's behavior empirically with a slight change in the neural net's architecture and the training objective.
\end{abstract}

\section{INTRODUCTION}

Grey-box modeling in general refers to the combination of theory-driven structures and data-driven components \citep[see, e.g.,][]{sohlbergGreyBoxModelling2008}.
In this paper, we are interested in combining theory-driven models such as mathematical models of physical phenomena, and data-driven, machine-learning models.
For example, for regressing from $x$ to $y$, we are interested in models like
\begin{equation*}
    y = f_\T(x; \theta_\T) + f_\D(x; \theta_D) + e,
\end{equation*}
where $f_\T$ and $f_\D$ denote theory- and data-driven models parameterized by $\theta_\T$ and $\theta_\D$, respectively, and $e$ is noise.
A more general model will appear in \cref{dgm} and thereafter.
We discuss how we should (or should not) cast the estimation problem of a grey-box model's parameters.

We argue over the interpretability of grey-box models.
They can be interpretable because a part of the model lies on a theory backbone and often has a small number of parameters.
They are valuable tools to glimpse insights from data on which our theory is essentially incomplete.
However, as we discuss later, interpretability is not a free lunch, and we need to pay attention to how we perform the estimation to secure the trustworthiness of the interpretation.

We are interested in cases where the data-driven model, $f_\D$, is a deep neural network.
In such a case, care must be taken for the theory-based model, $f_\T$, not to be overwritten and ignored by $f_\D$ due to the expressive power of the latter.
Such models have been studied recently \citep[][see also \cref{related}]{yinAugmentingPhysicalModels2021,takeishiPhysicsintegratedVariationalAutoencoders2021,qianIntegratingExpertODEs2021,wehenkelRobustHybridLearning2022}.
Deep grey-box models are typically learned by solving optimization problems like
\begin{equation*}
    \underset{\theta_\T, \theta_\D}{\text{minimize}} ~ \mathcal{L} + \lambda \mathcal{R},
\end{equation*}
where $\mathcal{L}$ is a prediction loss, and $\mathcal{R}$ is a regularizer needed for preventing $f_\T$ from being ignored.
Although this optimization may result in a model with good prediction performance, we cannot judge if such an estimator is worth being interpreted.
There can be multiple parameter values that achieve similar performance, and the optimization's solution tells us nothing about the property of $\mathcal{R}$, so it provides only insufficient information to justify a choice of $\theta_\T$ to be interpreted.
For the same reason, we cannot compare different $\mathcal{R}$s solely based on the optimization's solution.
Instead, we should analyze the behavior of $\mathcal{R}$, at least empirically, toward obtaining a trustworthy estimation and its interpretation.

Our idea is simple: We withhold the estimation of $\theta_\T$, at least in the first trial of data analysis.
To this end, we learn $f_\D$ adaptively with realizations of $\theta_\T$, essentially marginalizing out $\theta_\T$.
Such a slight change in the formulation allows us to analyze $\mathcal{R}$ empirically by examining its landscape without any need for re-training.
In this paper, we take up the aforementioned argument about the estimation's trustworthiness for discussion, formulate the above idea, and conduct an empirical investigation of its effectiveness.

A useful byproduct of the proposed formulation is that the optimization of $\mathcal{L}$ and $\mathcal{R}$ can now be decoupled.
It allows us to use different optimizers for the two objectives, as well as to use unlabeled test data to optimize $\mathcal{R}$.
Moreover, we found that such a decoupled optimization makes the optimization much less sensitive to the hyperparameter, $\lambda$.

\section{PRELIMINARY}
\label{dgm}

\subsection{Definition}
\label{dgm:def}

We define \emph{deep grey-box models} as compositions of theory-driven models and data-driven models, with the latter being deep neural networks.
For the sake of discussion, we suppose regression problems where $y \in \mathcal{Y}$ is to be predicted from $x \in \mathcal{X}$, though the extension to other problems is straightforward.
We denote such a model in general by
\begin{equation}\label{eq:model}
    y = \mathcal{C}(f_\T, f_\D; x),
\end{equation}
where $\mathcal{C}$ is a functional that takes the two types of functions and an input variable as arguments.
The two functions, $f_\T$ and $f_\D$, are a theory-driven model and a deep neural network, with unknown parameters $\theta_\T \in \Theta_\T$ and $\theta_\D \in \Theta_\D$, respectively.
We may write $f_\T(x;\theta_T)$ to manifest $f_\T$'s dependency on $\theta_\T$ or write simply $f_\T(x)$ though it still depends on $\theta_\T$ (and analogously for $f_\D$ and $\theta_\D$).
Note that not only $f_\D$ but also $f_\T$ may have unknown parameters to be inferred.
We usually expect $\dim\theta_\T \ll \dim\theta_\D$.
The functional, $\mathcal{C}$, evaluates $f_\T$ and $f_\D$ with a given $(\theta_\T,\theta_D,x)$ and then mixes up their outputs to give the final output of the model.

We try to keep the generality of $\mathcal{C}$; it may include general function compositions and their arbitrary transformations:
\begin{equation*}
    \mathcal{C}(f_\T, f_\D; x) = \operatorname{SomeTransformation}[ f_\D(f_\T(x), x) ].
\end{equation*}
Meanwhile, one of the most prevailing forms of $\mathcal{C}$ in the literature is the additive grey-box ODEs like:
\begin{equation*}
    \mathcal{C}(f_\T, f_\D; x) = \operatorname{ODESolve} [ \dot{s} = f_\T(s) + f_\D(s) \mid s_0=x ],
\end{equation*}
where $s$ is the state variable of the dynamics, $s_0$ is the initial condition, and $\operatorname{ODESolve}$ denotes an operation that numerically solves initial value problems.
Such \emph{grey-box} (ordinary or partial) differential equations have been studied by researchers such as \citet{sasakiNeuralGrayboxIdentification2019,yinAugmentingPhysicalModels2021,takeishiPhysicsintegratedVariationalAutoencoders2021,qianIntegratingExpertODEs2021}.

We are particularly interested in cases where $\mathcal{C}$ inherits the expressive power of $f_\D$ as a function approximator.
This is \emph{not} the case, for example, when $\mathcal{C}$ only contains compositions such as $f_\T(f_\D(x), x)$, that is, $f_\T$ is ``outside'' $f_\D$ \citep[e.g.,][]{raissiPhysicsinformedNeuralNetworks2019,arikInterpretableSequenceLearning2020,schnellHalfinverseGradientsPhysical2022}.
Estimation of such models is less challenging because $f_\T$ cannot be ignored by construction.
In contrast, we address more difficult cases where $f_\T$ is ``inside'' $f_\D$, for which we should be careful so that $f_\T$ is not overwritten and ignored by $f_\D$.
We put the following assumptions on the model:
\begin{assumption}\label{asmp:fD}
    $f_\D\colon \mathcal{X} \!\to\! \mathcal{Y}$ is a universal function approximator; for any $\epsilon > 0$ and a continuous function $g\colon \mathcal{X} \to \mathcal{Y}$, there exists $\theta_\D \in \Theta_\D$ satisfying $\sup_{x \in S_X} \Vert f_\D(x; \theta_\D) - g(x) \Vert < \epsilon$, where $S_X \subset \mathcal{X}$ is some compact set.
\end{assumption}
\begin{assumption}\label{asmp:C}
    $\mathcal{C}(f_\T, f_\D; \cdot)\colon \mathcal{X} \to \mathcal{Y}$ is also a universal function approximator; that is, for any $\epsilon>0$, $\theta_\T\in\Theta_\T$, and a continuous function $g'\colon \mathcal{X} \to \mathcal{Y}$, there exists $\theta_\D\in\Theta_\D$ satisfying $\sup_{x \in S_X} \Vert \mathcal{C}(f_\T, f_\D; x) - g'(x) \Vert < \epsilon$.
\end{assumption}
\begin{remark}
    We assume the universal approximation property just for rigorously construct the discussion.
    Even without the universal approximation property, as long as $f_\D$ and $\mathcal{C}$ are much more expressive than $f_\T$, discussions below would approximately hold in practice.
\end{remark}
\begin{assumption}\label{asmp:Lipschitz_f}
    $f_\T$ and $f_\D$ are Lipschitz continuous with regard to $(x,\theta_\T)$ and $(x,\theta_\D)$, respectively.
\end{assumption}

\subsection{Why Grey-box?}

Deep grey-box models are powerful function approximators with a certain level of inherent interpretability owing to $f_T$, a human-understandable model with a theory as a backbone.
A typical use case would be to estimate a grey-box model on data on which our theory is essentially incomplete and inspect the estimated model to glimpse insights, e.g., when the incomplete theory is correct or not, how the missing part approximated by $f_\D$ behaves, and so on.

Deep grey-box models can also be advantageous in generalization capability and robustness to extrapolation, as reported empirically so far \citep[][]{qianIntegratingExpertODEs2021,yinAugmentingPhysicalModels2021,takeishiPhysicsintegratedVariationalAutoencoders2021,wehenkelRobustHybridLearning2022}.
It is natural to expect such improvements because the presence of $f_\T$ in the model would reduce the sample complexity of the learning problem, and $f_\T$ is supposed to work well in the out-of-data regime (in other words, it is a requirement for a model to be regarded as theory-driven).
However, rigorous analysis of generalization is challenging for models involving deep neural nets.
Anyway, we do not touch on such performance aspects of deep grey-box models given the previous studies, so the comparison to non-grey-box models is out of the paper's scope.

\subsection{Empirical Risk Minimization Cannot Select \texorpdfstring{$\theta_T$}{θT}}
\label{dgm:difficulty}

There is a natural consequence of deep grey-box modeling; the theory-driven model's parameter, $\theta_\T$, cannot be chosen solely by minimizing an empirical risk of prediction.
For example, suppose we learn $\mathcal{C}(f_\T, f_\D;x) = f_\T(x) + f_\D(x)$ by minimizing the mean squared error, $\mathcal{L} = \Vert y - (f_\T(x) + f_\D(x)) \Vert_2^2$.
The empirical risk can be minimized to a similar extent for \emph{any} $\theta_\T \in \Theta_\T$ because $f_\D$, a deep neural net, can, it alone, approximate any function on the training set (as assumed in \cref{asmp:fD}) and thus also the function $y-f_\T(x)$.
We formally state this fact as follows:
\begin{proposition}\label{prop:erm}
    Let $S = \{(x_1,y_1),\dots,(x_n,y_n)\}$ be a training set.
    Let $\mathcal{L}_{(x,y)}(\theta_\T, \theta_\D)$ be a Lipschitz continuous loss function between the prediction (i.e., the value of $\mathcal{C}(f_\T, f_\D; x)$) and the target (i.e., $y$).
    Let $\mathcal{L}_S(\theta_\T,\theta_\D)=\sum_{(x,y) \in S} \mathcal{L}_{(x,y)}(\theta_\T, \theta_\D)$ be the empirical risk on the training set.
    Suppose that Assumptions~\ref{asmp:fD}--\ref{asmp:Lipschitz_f} hold.
    Then, for any $\epsilon'>0$, $\theta_\D \in \Theta_\D$, and $\theta_\T, \theta'_\T \in \Theta_\T$ where $\theta_\T \neq \theta'_\T$, there exists $\theta'_\D \in \Theta_\D$ that satisfies
    \begin{equation}\label{eq:erm}
        \left\vert \mathcal{L}_S(\theta_\T, \theta_\D) - \mathcal{L}_S(\theta'_\T, \theta'_\D) \right\vert < \epsilon'.
    \end{equation}
\end{proposition}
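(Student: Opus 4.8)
The plan is to exploit \cref{asmp:C}: once the theory parameter is switched from $\theta_\T$ to $\theta'_\T$, the data-driven network can be re-tuned so that the \emph{whole} model reproduces, up to arbitrary accuracy on the finitely many training inputs, whatever predictions it made before the switch; since the loss is Lipschitz in the prediction, the two empirical risks must then be within $\epsilon'$. Concretely, I would first fix the reference input--output map $g\colon\mathcal{X}\to\mathcal{Y}$, $g(x):=\mathcal{C}\big(f_\T(\cdot;\theta_\T),f_\D(\cdot;\theta_\D);x\big)$, i.e.\ the function computed by the model at the given pair $(\theta_\T,\theta_\D)$. By \cref{asmp:Lipschitz_f} the maps $f_\T(\cdot;\theta_\T)$ and $f_\D(\cdot;\theta_\D)$ are continuous in $x$, and since $\mathcal{C}$ is built from such maps together with continuity-preserving operations (e.g.\ $\operatorname{ODESolve}$, whose output depends continuously on the data under a Lipschitz vector field), $g$ is continuous. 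The finite training set $\{x_1,\dots,x_n\}$ is contained in a compact set, which we take to be $S_X$. Applying \cref{asmp:C} with the \emph{fixed} theory parameter $\theta'_\T$ and the continuous target $g$: for every $\delta>0$ there is $\theta'_\D\in\Theta_\D$ with $\sup_{x\in S_X}\big\Vert\mathcal{C}(f_\T(\cdot;\theta'_\T),f_\D(\cdot;\theta'_\D);x)-g(x)\big\Vert<\delta$, hence in particular $\big\Vert\mathcal{C}(f_\T(\cdot;\theta'_\T),f_\D(\cdot;\theta'_\D);x_i)-g(x_i)\big\Vert<\delta$ for every $i$.

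Next I would push this through the loss. Write $\mathcal{L}_{(x,y)}(\theta_\T,\theta_\D)=\ell\big(\mathcal{C}(f_\T(\cdot;\theta_\T),f_\D(\cdot;\theta_\D);x),y\big)$, where $\ell$ is Lipschitz in its first (prediction) argument with constant $L$; the degenerate case $L=0$ is trivial since the loss is then independent of all parameters, so assume $L>0$. For each training point,
\[
\big\vert\mathcal{L}_{(x_i,y_i)}(\theta_\T,\theta_\D)-\mathcal{L}_{(x_i,y_i)}(\theta'_\T,\theta'_\D)\big\vert\le L\,\big\Vert g(x_i)-\mathcal{C}(f_\T(\cdot;\theta'_\T),f_\D(\cdot;\theta'_\D);x_i)\big\Vert<L\delta,
\]
and summing over $i=1,\dots,n$ with the triangle inequality yields $\big\vert\mathcal{L}_S(\theta_\T,\theta_\D)-\mathcal{L}_S(\theta'_\T,\theta'_\D)\big\vert<nL\delta$. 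Choosing $\delta<\epsilon'/(nL)$ then gives \cref{eq:erm}. Note that the hypothesis $\theta_\T\neq\theta'_\T$ plays no role in the argument — for $\theta_\T=\theta'_\T$ one simply takes $\theta'_\D=\theta_\D$ — and is stated only to underline the content of the proposition.

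The only delicate points are the two ``soft'' ones, and both are mild: first, that $g$ is continuous so that \cref{asmp:C} is applicable at all, which is exactly where \cref{asmp:Lipschitz_f} and the tacit requirement that $\mathcal{C}$ maps continuous functions to continuous functions enter; and second, that the compact set $S_X$ furnished by the universal-approximation property can be taken to contain all training inputs, which is immediate because $S$ is finite. Everything else is the one-line Lipschitz estimate above, so I do not anticipate a genuine obstacle; the only real care is in phrasing the continuity claim cleanly for a general functional $\mathcal{C}$.
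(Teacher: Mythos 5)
Your proposal is correct and follows essentially the same route as the paper: apply \cref{asmp:C} with the original model's output map as the continuous target to obtain $\theta'_\D$, then push the uniform approximation error through the Lipschitz loss and sum over the $n$ training points. The only difference is that you spell out the continuity of $g$ and the choice of compact set containing the training inputs, details the paper compresses into its opening ``from the assumptions'' step.
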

\begin{proof}
    From the assumptions, for any $\epsilon > 0$, $\theta_\D \in \Theta_D$, and $\theta_\T \neq \theta'_\T \in \Theta_\T$, there exists $\theta'_\D \in \Theta_D$ that satisfies $\sup_{x \in \{x_1,\dots,x_n\}} \Vert \mathcal{C}(f_\T, f_\D; x) - \mathcal{C}(f'_\T, f'_\D; x) \Vert < \epsilon$, where $f'_i$ is parameterized by $\theta'_i$ for $i=\T,\D$.
    Since $\mathcal{L}$ is Lipschitz continuous, $\sup_{x,y \in S} \vert \mathcal{L}_{(x,y)}(\theta_\T,\theta_\D) - \mathcal{L}_{(x,y)}(\theta'_\T,\theta'_\D) \vert < K \epsilon$ with where $K$ is $\mathcal{L}$'s Lipschitz constant.
    Therefore, with $\epsilon' \coloneqq \vert S \vert K \epsilon$, $\vert \mathcal{L}_S(\theta_\T, \theta_\D) - \mathcal{L}_S(\theta'_\T, \theta'_\D) \vert < \epsilon'$.
\end{proof}

\subsection{Regularized Risk Minimization}
\label{dgm:existing}

\cref{prop:erm} states that any $\theta_\T \in \Theta_\T$ can be equally likely solely under the empirical risk.
It necessitates regularizing the problem; we should optimize $\mathcal{L}_S + \lambda \mathcal{R}$ instead, where $\lambda \geq 0$ is a regularization hyperparameter, and $\mathcal{R}$ is some regularizer that should reflect our inductive biases on how we should combine the theory- and data-driven models.
Let us, for example, consider the linear combination case, $\mathcal{C}(f_\T, f_\D; x) = f_\T(x) + f_\D(x)$.
One of the common ways of thinking is that $f_\T$ should as accurately explain the $x$--$y$ relation as possible, and $f_\D$ should have the least possible effect.
This idea can be operationalized by defining $\mathcal{R} = \Vert f_\D \Vert$, where the norm is the function's norm.
Though such an $\mathcal{R}$ has been a popular choice, it is not the only possibility.
For example, when one wants the two models' output to be uncorrelated, one can use $\mathcal{R} = \vert \langle f_\T(x), f_\D(x) \rangle \vert$.\footnote{Suggesting specific $\mathcal{R}$ for each application or in general is out of the scope of this paper; on contrary, our proposal in \cref{method} is for cases where we cannot specify $\mathcal{R}$ \emph{a priori}.}
We assume that $\mathcal{R}$ depends only on $x$.
It is natural because the role of $\mathcal{R}$ is not to fit the $x$--$y$ relation.
We will recall this assumption, if necessary, by writing $\mathcal{R}_{S_X}$, where $S_X = \{x_1,\dots,x_n\}$ is the extract of $x$s from $S$.
We do not suppose, at least explicitly, any more specifications of $\mathcal{R}$ than this assumption.

The regularized estimation problem can be cast as follows:

\paragraph{Inductive Learning}

The simplest formulation is
\begin{equation}\label{eq:inductive}
    \theta_\T^*, \theta_\D^* = \arg \min_{\theta_\T, \theta_\D} \mathcal{L}_S(\theta_\T, \theta_\D) + \lambda \mathcal{R}_{S_X}(\theta_\T, \theta_\D).
\end{equation}
In this formulation, not only $\mathcal{L}$ but also $\mathcal{R}$ suffers a generalization gap.
Also, $\lambda$ needs to be tuned somehow.

\paragraph{Transductive Learning}

Since we assume that $\mathcal{R}$ only depends on $x$, it is reasonable to mention \emph{transductive learning} \citep{gammermanLearningTransduction1998}.
Let $S'_X$ be some set of $x$ that may include $S_X$ as a subset.
The idea is to minimize the unsupervised part of the objective, not on the training data $S_X$ but rather on $S'_X$;
\begin{equation}\label{eq:transductive}
    \theta_\T^*, \theta_\D^* = \arg \min_{\theta_\T, \theta_\D} \mathcal{L}_S(\theta_\T, \theta_\D) + \lambda \mathcal{R}_{S'_X}(\theta_\T, \theta_\D).
\end{equation}
The generalization gap disappears for $\mathcal{R}$ when (a subset of) $S'_X$ is the test set, but still $\lambda$ needs to be tuned.

\section{TOWARD TRUSTWORTHY ESTIMATION}
\label{method}

\subsection{Challenges of Deep Grey-box Model Estimation}

Deep grey-box models have been studied mainly in terms of empirical generalization and extrapolation capability  \citep{yinAugmentingPhysicalModels2021,takeishiPhysicsintegratedVariationalAutoencoders2021,qianIntegratingExpertODEs2021,wehenkelRobustHybridLearning2022}.
However, when the model's interpretation is concerned, the prediction performance does not speak a lot; there can be multiple parameter values that perform similarly (cf. \emph{Rashomon} sets), and we cannot judge which one we should interpret.
The solution of the optimization in \cref{eq:inductive} or \eqref{eq:transductive} tells us nothing about the analytical property of $\mathcal{R}$, so we can hardly understand the full picture of how the optimization selects $\theta_\T$.
Instead of uncritically optimizing the regularizer, $\mathcal{R}$, we should know the property of $\mathcal{R}$ in order to gain more information to explain the choice of $\theta_\T$ to be interpreted.
We would contrast the situation with, for example, the estimators of linear regression models, which have been extensively analyzed and thus are trustworthy in some sense.
We do not suggest analyzing our $\mathcal{R}$s analytically as it is too problem-dependent, but analyzing them at least empirically would help us make $\theta_\T$'s estimation more trustworthy.

The challenge due to not knowing $\mathcal{R}$'s property stands out more when we do not know what $\mathcal{R}$ is suitable for the given data and need to compare different candidate $\mathcal{R}$s, which is often the case as we do not know the whole data-generating process.
The point estimation via \cref{eq:inductive} or \eqref{eq:transductive} would not tell much about the goodness of $\mathcal{R}$, since different $\mathcal{R}$s could achieve similar prediction performance.
This viewpoint also supports the need for analyzing $\mathcal{R}$ at least empirically for gaining information to compare different $\mathcal{R}$s.

Another challenge, yet more technical, is the choice of the regularization hyperparameter, $\lambda$.
It can be tricky because, in \cref{eq:inductive} or \eqref{eq:transductive}, it controls two things at the same time: ``which $\theta_\T$ should be selected'' and ``how much $f_\D$ should be regularized.''
They are different problems if interrelated, and thus decoupling them would be beneficial.

\subsection{Proposed Formulation}

As we argued above, analyzing $\mathcal{R}$ empirically can be a useful first step toward a trustworthy estimation of $\theta_\T$.
More specifically, we aim to explore the landscape of $\mathcal{R}$, that is, to evaluate the values of $\mathcal{R}$ for different $\theta_\T$s.
The na\"ive way to do so is to re-run the optimization in \cref{eq:inductive} or \eqref{eq:transductive} for many times with different values of $\theta_\T$ fixed at each time, but it is inefficient even for a low-dimensional $\theta_\T$, and moreover, it does not allow to predict with a new value of $\theta_\T$ that was not exactly tried during such many training runs.
Instead, we suggest an approach to exploring $\mathcal{R}$'s landscape without the need for re-running the optimization.

At the core of our suggestion is ``marginalizing out'' $\theta_\T$ in the training phase.
To this end, we slightly modify $f_\D$ so that it works adaptively with different values of $\theta_\T$ and then minimize the objective, taking its expectation with regard to $\theta_\T$.
It should result in a model that can predict equally well given whatever $\theta_\T$ in some feasible region.
We first estimate only $\theta_\D$ while leaving $\theta_\T$ undetermined; thus the idea is similar to probabilistic inference where some variables are marginalized out.
The remainder of this section explains each step of the proposed formulation.
\Cref{fig:overview} depicts the overview of the formulation.

\begin{figure}
    \centering
    \includegraphics[clip,width=0.6\linewidth]{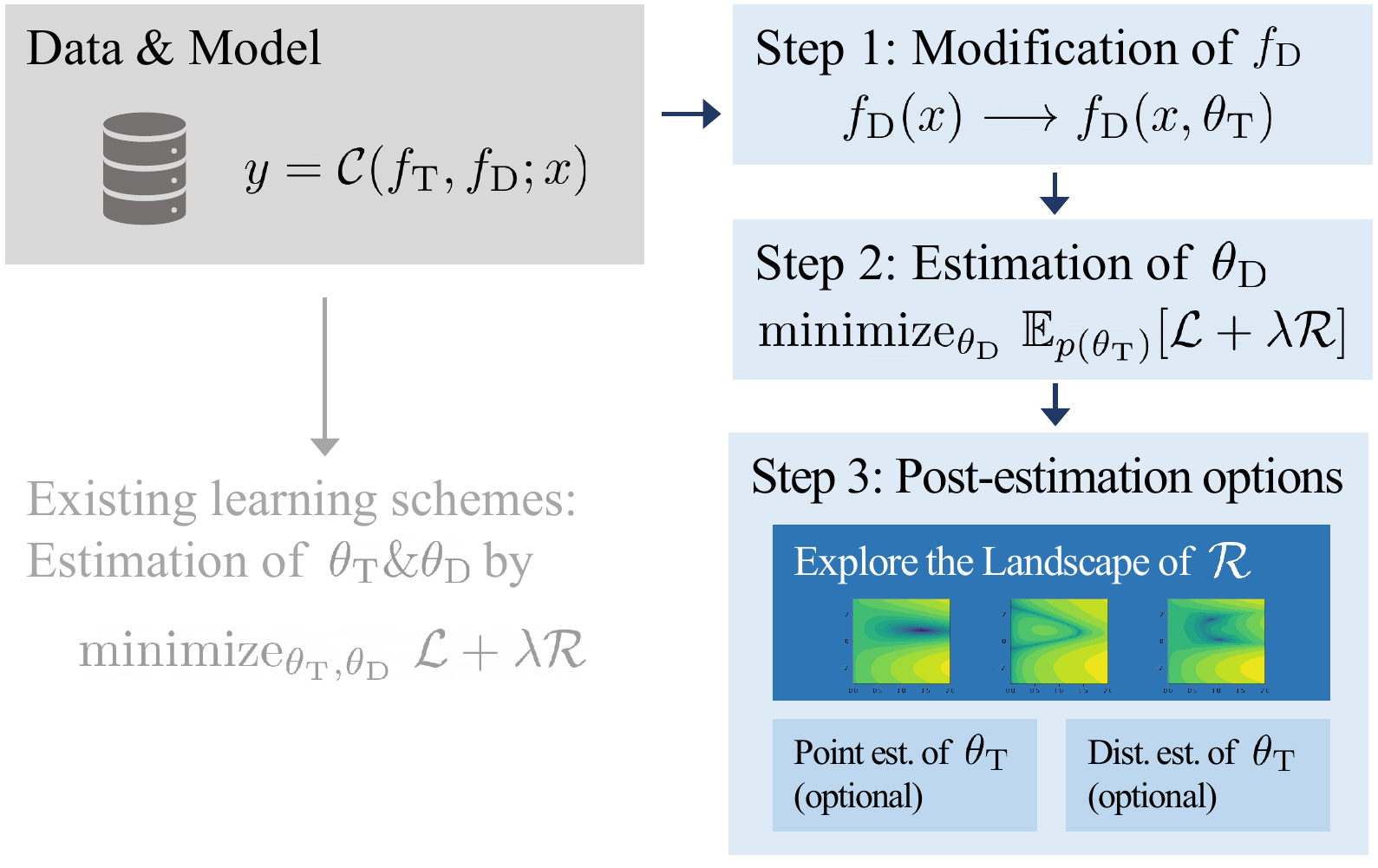}
    \caption{Overview of the proposed formulation.}
    \label{fig:overview}
\end{figure}

\subsubsection{Step 1: Modification of \texorpdfstring{$f_\D$}{fD}}
\label{method:step1}

We add two specifications of $f_\D$ to be used, without much loss of generality.
First, we suppose that $f_\D$ takes as arguments not only $x$ but also $\theta_\T$ (and possibly the value of $f_\T(x;\theta_\T)$).
Second, we define $f_\D$'s architecture such that $f_\D$ can adapt to different values of $\theta_\T$ given as an argument.
We can realize them with general techniques of conditional modeling; for example, by concatenating $\theta_\T$ to the original input; or by using hypernetworks that transform $\theta_\T$ to (a part of) $\theta_\D$.
In the experiments in \cref{exp}, we simply concatenate $\theta_\T$ and $f_\T(x;\theta_\T)$ to $x$ and feed them to $f_\D$ either with an extended input layer or with additional channels.
We may emphasize the dependency of $f_\D$ on $\theta_\T$ (and thus on $f_\T$) by denoting it as $f_\D(f_\T)$ in what follows.

\subsubsection{Step 2: Estimation of \texorpdfstring{$\theta_\D$}{θD}}
\label{method:step2}

We estimate $\theta_\D$ by optimizing the expectation of the objective with regard to $p(\theta_\T)$, some prior distribution of $\theta_\T$.
\begin{equation}\label{eq:adaptive_train}
    \theta_\D^* = \arg \min_{\theta_\D} \mathbb{E}_{p(\theta_\T)} \! \left[ \mathcal{L}_S(\theta_\T, \theta_\D) \!+\! \lambda \mathcal{R}_{S_X}(\theta_\T, \theta_\D) \right],
\end{equation}
where $\mathcal{L}_S$ and $\mathcal{R}_{S_X}$ are computed with regard to the deep grey-box model $\mathcal{C}(f_\T, f_\D(f_\T); x)$ with the adaptive data-driven model, $f_\D(f_\T)$.
We estimate the objective in \cref{eq:adaptive_train} with $m$ samples of $\theta_\T$ drawn from $p(\theta_\T)$.
We empirically found that $m=1$ was sufficient in our experiment.

The $\lambda \mathcal{R}_{S_X}$ term in \cref{eq:adaptive_train} is present just for generality.
When one has multiple candidates of $\mathcal{R}$, which is one of our motivating situations, it is unclear which $\mathcal{R}$ we should use in \cref{eq:adaptive_train}.
We suggest two options:
One is to set $\mathcal{R}$ as the sum of all the candidates and use a small value of $\lambda$.
This is reasonable because it prevents the candidate $\mathcal{R}$s from getting overly large but still does not strongly impose any one of them.
Another option is to set $\lambda=0$, with which we no longer need to set any $\mathcal{R}$.
This is also a reasonable choice because the minimization of $\mathcal{R}$, if it matters, can happen later in the post-estimation phase as we will see.

We suppose that we have some $p(\theta_\T)$ from which we can draw $\theta_\T$s.
It is typically available from domain knowledge concerning the theory-based model.
For example, it can be the uniform distribution on a plausible range of the parameters.
It is also technically possible to let $p(\theta_\T)$ have unknown parameters and estimate them from data with the reparametrization trick.

\subsubsection{Step 3: Post-Estimation Options}
\label{method:step3}

The optimization in \cref{eq:adaptive_train} leaves $\theta_\T$ undetermined, while the model works with any different values of $\theta_\T$ within a region having a reasonable mass of $p(\theta_\T)$.
It allows us to take the following options in the post-estimation phase.

\paragraph{Explore the Landscape of \texorpdfstring{$\mathcal{R}$}{R}}

We can compute the values of $\mathcal{R}$ for different $\theta_\T$s, only with the cost of the forward evaluation of the functions.
If $\theta_\T$ is up to two-dimensional, we can directly draw the landscape of $\mathcal{R}$.
If $\dim\theta_\T>2$ but remains moderate (say $\lessapprox 20$), we can watch the variations of $\mathcal{R}$ by varying each pair of $\theta_\T$'s elements while fixing the others at some reference values.
The same discussion applies to the analysis of the supervised loss, $\mathcal{L}$.

How should we utilize the landscape of $\mathcal{R}$?
Although it is just up to a user's policy and belief in each application, as a general practice, we suggest utilizing it for assessing the reliability of potential estimation based on the $\mathcal{R}$.
For example, if an $\mathcal{R}$ does not have clear extrema along some axes of $\theta_\T$, it implies that those elements of $\theta_\T$ are not quite identifiable under such an $\mathcal{R}$.
It can contribute to the estimation's trustworthiness, e.g., by withholding interpretation about some parameters.
Another general usage is to compare different candidates of $\mathcal{R}$.
The landscapes can give an intuition about the nature of each $\mathcal{R}$, which is useful for a user to choose one (or more) out of the candidates; we will see concrete use cases later in numerical experiments.

Models with high-dimensional $\theta_\T$ basically remain an open challenge; analyzing and visualizing a high-dimensional parameter space are very challenging in general.
One of the options is to get an overview of the landscape via dimensionality reduction of the parameter space by techniques such as random projection and principal component analysis.
With that being said, even if $\theta_\T$ is high-dimensional, we can still benefit from the proposed formulation for the point estimation or the posterior inference discussed below.

\paragraph{Point Estimation of \texorpdfstring{$\theta_\T$}{θT} (optional)}

After analyzing $\mathcal{R}$s and choosing one to use, a user may want to select $\theta_\T$ as
\begin{equation}\label{eq:adaptive_test_optim}
    \theta_\T^* = \arg \min_{\theta_\T} \mathcal{R}_{S'_X}(\theta_\T, \theta_\D^*).
\end{equation}
Depending on the application, $S'_X$ may be a singleton of a test sample, a set of test samples, the union of training and test sets, or a set of grid points on $\mathcal{X}$.
After choosing a specific $\theta_\T$, one does not need to re-run the optimization in \cref{eq:adaptive_train} because the estimated $f_\D$ works adaptively to $\theta_\T$.
With that being said, it is also an option to drop $f_\D$'s dependency on $\theta_\T$ and run the optimization in \cref{eq:inductive} or \eqref{eq:transductive} only with regard to $\theta_\D$.

Although the end result of \cref{eq:adaptive_test_optim} (i.e., a point estimation) has the same form with that of existing frameworks in \cref{eq:inductive} or \eqref{eq:transductive}, our framework can benefit from the decoupled nature of the optimization, that is, $\theta_\D$ and $\theta_\T$ are optimized individually in \cref{eq:adaptive_train} and \cref{eq:adaptive_test_optim}, respectively.
In \cref{eq:adaptive_train}, $\mathcal{R}$ may be used solely for regularizing the behavior of $f_\D$.
In contrast, in \cref{eq:adaptive_test_optim}, $\mathcal{R}$ is minimized for selecting $\theta_\T$.
Such a clear distinction of the semantics of $\mathcal{R}$ in each scene can make the tuning of $\lambda$ easier.
Moreover, the decoupled optimization allows us to use different optimizers and datasets for estimating $\theta_\D$ and $\theta_\T$.

When we should estimate different $\theta_\T$ for each query $x$, we have two options:
One is to solve \cref{eq:adaptive_test_optim} with $S'_X$ being the singleton, $S'_X=\{x\}$, which is inefficient when there are a large number of queries.
Another option is to solve
\begin{equation}\label{eq:adaptive_test_optim_encoder}
    \theta_h^* = \arg \min_{\theta_h} \sum_x \mathcal{R}_{S'_X}(\theta_\T=h(x), \theta_\D^*),
\end{equation}
where $h\colon \mathcal{X} \to \Theta_\T$ is a trainable model parameterized with $\theta_h$ and is used for inferring $\theta_\T$ given $x$ in an amortized manner (i.e., so-called an encoder).

\paragraph{Distribution Estimation of \texorpdfstring{$\theta_\T$}{θT} (optional)}

We can also consider a distribution estimation of $\theta_\T$.
Recall that in deep grey-box models, $\mathcal{L}$ (i.e., the supervision from labeled data) is no longer informative to decide the value of $\theta_\T$ due to the expressive power of $f_\D$, and only $\mathcal{R}$ dictates $\theta_\T$.
Hence, it is reasonable to define a distribution of $\theta_\T$ only with $\mathcal{R}$ as
\begin{equation}
    p(\theta_\T \mid \text{data}) \propto p(\theta_\T) \exp \{ - \beta \mathcal{R}_{S'_X}(\theta_\T, \theta_\D^*) \},
\end{equation}
for some $\beta>0$.
If we compute the full landscape of $\mathcal{R}$ for a low-dimensional $\theta_\T$, it is equivalent to having this distribution explicitly.
Even though $p(\theta_\T \mid \text{data})$ cannot be normalized when $\theta_\T$ is high dimensional, we can efficiently draw samples from the distribution using MCMC.

\subsection{Discussion}

The increased model complexity of $f_\D$ does not come with an additional need for real training data because we can draw as many random samples of $\theta_\T$ as the computational resources allow in the estimation process.
As a result, while the learning problem becomes (hopefully slightly) more complicated than the original ones, it would not make the problem significantly more challenging.
We will empirically confirm it through numerical experiments.

Though the full applicability of the proposed framework is limited to moderate-dimensional $\theta_\T$, in practice, it would not significantly limit applications because theory-based models often have (or should have) a small number of parameters.
Models with high-dimensional $\theta_\T$ such as unknown fields are a challenging open problem.

As repeatedly argued, the main characteristic of the proposed formulation is that it makes it easy to analyze $\mathcal{R}$s empirically.
Here, care should be taken not to reuse the same data both for such an analysis and the estimation with $\mathcal{R}$ selected via the former analysis.
We actually commit ``data reuse'' in parts of the experiments in \cref{exp}, which is admittable only because our purpose is not to analyze the data but to compare the different estimation schemes.

\subsection{A Numerical Example}
\label{method:example}

We show a scenario of how the framework could be used.
We generate data from $y = \sin(x) + \cos(x) + e$, where $x \in \mathcal{X}=[-\pi, \pi]$, and $e \sim \mathcal{N}(0, 0.1^2)$.
We use a deep grey-box model $\mathcal{C}(f_\T, f_\D; x) = f_\T(x) + f_\D(f_\T,x)$ with $f_\T(x) = a \sin(x + c)$ and $f_\D(f_\T,x) = \operatorname{MLP}(x, \theta_\T, f_\T(x))$, where $\theta_\T=[a, c]$ is the $f_\T$'s parameters.
$\operatorname{MLP}(\cdot)$ is a feed-forward neural net.
We feed $\operatorname{MLP}$ with $[x; \theta_\T; f_\T(x)] \in \mathbb{R}^4$, and it returns values in $\mathbb{R}$.
Note that our $f_\D$ is aware of $\theta_\T$ and can work adaptively to different values of $\theta_\T$, as advised in Step~1 (\cref{method:step1}).

The next step, Step~2 (\cref{method:step2}), is to optimize the expected objective in \cref{eq:adaptive_train}.
We run it with the squared error, $\mathcal{L} = \sum (y - \mathcal{C}(f_\T, f_\D; x))^2$, and $\lambda=0$ (i.e., we do not specify $\mathcal{R}$).
We let $p(\theta_\T)$ be the uniform distribution over $[0,2] \times [-\pi,\pi]$.
As a result, we obtain $\theta_\D^*$.

As Step~3 (\cref{method:step3}), we overview the landscapes of some candidates of $\mathcal{R}$.
In \cref{fig:toy1_Rmap}, we visualize the values of the following $\mathcal{R}$s over a grid of $\theta_\T=[a,c]$:
\begin{itemize}[itemsep=0pt,topsep=0pt]
    \item $\mathcal{R}_\text{normD} \!=\! \sum_x f_\D(x)^2$ (i.e., $f_\D$ should work minimally);
    \item $\mathcal{R}_\text{corr} = \left\vert \sum_x f_\T(x) \cdot f_\D(x) \right\vert$ (i.e., $f_\T$ and $f_\D$ should work uncorrelatedly); and
    \item $\mathcal{R}_\text{normdif} = \left\vert \sum_x f_\T(x)^2 - \sum_x f_\D(x)^2 \right\vert$ (i.e., $f_\T$ and $f_\D$ should work to the same extent).
\end{itemize}
The leftmost heatmap shows $\mathcal{R}_\text{normD}$; we see that it reaches the minimum around $[a,c]=[\sqrt{2},\pi/4]$, which is natural because $\sin(x) + \cos(x) = \sqrt{2} \sin (x + \pi/4)$.
It serves as a ``truth'' for those who believe that $f_\D$ should work minimally.
However, the case is not over if one has prior knowledge dictating that $f_\T$ and $f_\D$ should be uncorrelated.
Such a user would like to use $\mathcal{R}_\text{corr}$, which is visualized in the second heatmap from the left.
We see that multiple $\theta_T$s are practically equivalent under this regularizer.
If we want to further narrow down the possible choices of $\theta_\T$, we need to consider more criteria.
Let us see $\mathcal{R}_\text{normdif}$, which is shown in the center heatmap.
We find that it is not enough, either, to select a (few) $\theta_\T$ value(s).
Summing $\mathcal{R}_\text{corr}$ and $\mathcal{R}_\text{normdif}$ results in the landscape shown in the fourth heatmap from the left, where we see two minima.
In practice, this would be the most reasonable regularizer that could be designed and explained through these empirical analyses of $\mathcal{R}$s.

\begin{figure*}[t]
    \centering
    \setlength{\tabcolsep}{1pt}
    {\fontsize{9pt}{10pt}\selectfont\begin{tabular}{ccccc}
        \hspace{8pt}$\mathcal{R}_\text{normD}$ &
        \hspace{8pt}$\mathcal{R}_\text{corr}$ &
        \hspace{8pt}$\mathcal{R}_\text{normdif}$ &
        \hspace{8pt}$\mathcal{R}_\text{corr} + \mathcal{R}_\text{normdif}$ &
        \hspace{8pt}$\mathcal{R}_\text{corr} \!+\! \mathcal{R}_\text{normdif} \!+\! c^2$
        \\
        \includegraphics[clip,height=26mm]{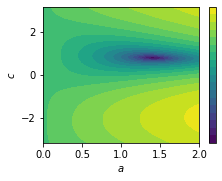} &
        \includegraphics[clip,height=26mm]{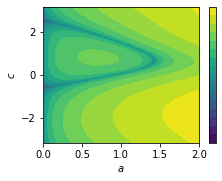} &
        \includegraphics[clip,height=26mm]{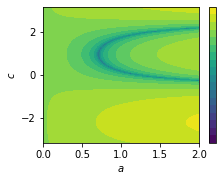} &
        \includegraphics[clip,height=26mm]{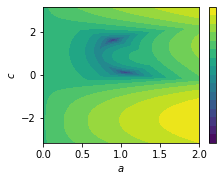} &
        \includegraphics[clip,height=26mm]{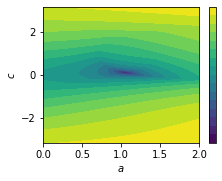}
    \end{tabular}}
    \caption{Landscapes of some regularizers and their combinations, for $f_\D$ trained by \cref{eq:adaptive_train} with $\lambda=0$ on the toy dataset in \cref{method:example}. The horizontal and vertical axes correspond to $a$ and $c$ of $f_\T$, respectively.}
    \label{fig:toy1_Rmap}
\end{figure*}

We provide one more visualization in the rightmost panel of \cref{fig:toy1_Rmap}, just for completeness.
It shows the values of $\mathcal{R}_\text{corr}+\mathcal{R}_\text{normdif}+c^2$, which has a single extremum.
However, the $c^2$ term can hardly be specified in practice without the knowledge of the data-generating process.

A message of the above story is that we should analyze the landscape of $\mathcal{R}$, instead of uncritically optimizing it.
Suppose that we follow the prior knowledge dictating that $f_\T$ and $f_\D$ should be uncorrelated.
Applying a standard learning scheme (i.e., optimize $\mathcal{L} + \lambda \mathcal{R}_\text{corr}$ without the adaptivity of $f_\D$) would result in $\theta_\T$ being practically a random choice from one of the many local minima of $\mathcal{R}_\text{corr}$ that we see in \cref{fig:toy1_Rmap} (2nd from the left).
This is hardly meaningful since these parameters are supposed to have an interpretation within the context of the relevant domain knowledge.

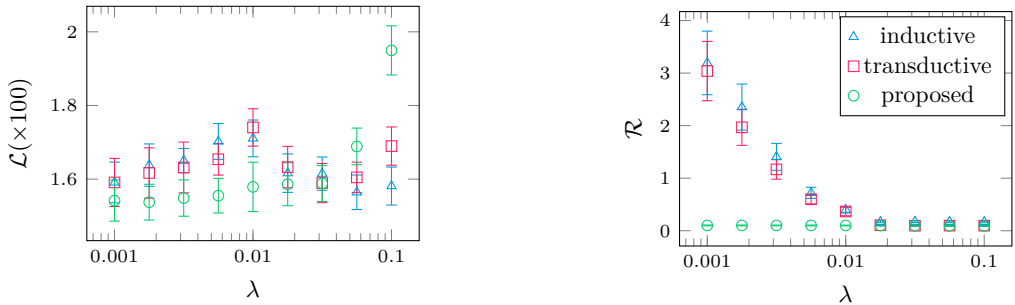
\begin{figure}
    \centering
    \begin{minipage}[t]{0.48\linewidth}
        \vspace{0pt}\centering
        \begin{tikzpicture}
            \begin{axis}[
                xmode = log,
                log ticks with fixed point,
                width = 6.0cm,
                height = 4.7cm,
                xlabel = {$\lambda$},
                ylabel = {$\mathcal{L} (\times 100)$},
            ]
              \addplot+[myblue, mark=triangle, only marks] plot[error bars/.cd, y dir=both, y dir=both, y explicit] table[x=coeff, y=Lmeanx100, y error=Lstderrx100] {plots_new/stats_toy1_inductive.txt};
              \addplot+[myred, mark=square, only marks] plot[error bars/.cd, y dir=both, y dir=both, y explicit] table[x=coeff, y=Lmeanx100, y error=Lstderrx100] {plots_new/stats_toy1_transductive.txt};
              \addplot+[mygreen, mark=o, only marks] plot[error bars/.cd, y dir=both, y dir=both, y explicit] table[x=coeff, y=Lmeanx100, y error=Lstderrx100] {plots_new/stats_toy1_adaptive.txt};
            \end{axis}
        \end{tikzpicture}
    \end{minipage}
    \begin{minipage}[t]{0.48\linewidth}
        \vspace{0pt}\centering
        \begin{tikzpicture}
            \begin{axis}[
                xmode = log,
                log ticks with fixed point,
                width = 6.0cm,
                height = 4.7cm,
                xlabel = {$\lambda$},
                ylabel = {$\mathcal{R}$},
                legend style={nodes={scale=0.9, transform shape}},
            ]
              \addplot+[myblue, mark=triangle, only marks] plot[error bars/.cd, y dir=both, y dir=both, y explicit] table[x=coeff, y=Rmean, y error=Rstderr] {plots_new/stats_toy1_inductive.txt};
              \addplot+[myred, mark=square, only marks] plot[error bars/.cd, y dir=both, y dir=both, y explicit] table[x=coeff, y=Rmean, y error=Rstderr] {plots_new/stats_toy1_transductive.txt};
              \addplot+[mygreen, mark=o, only marks] plot[error bars/.cd, y dir=both, y dir=both, y explicit] table[x=coeff, y=Rmean, y error=Rstderr] {plots_new/stats_toy1_adaptive.txt};
              \legend{inductive, transductive, proposed}
            \end{axis}
        \end{tikzpicture}
    \end{minipage}
    \caption{Main loss $\mathcal{L}$ and regularizer $\mathcal{R}$ on test data for the models learned with different values of regularization hyperparameter, $\lambda$, on the toy dataset in \cref{method:example}. Error bars show the standard errors by 20 random trials.}
    \label{fig:toy1_performance}
\end{figure}

Let us take one of the further options of Step~3, the point estimation.
Suppose using $\mathcal{R} = \mathcal{R}_\text{corr} + \mathcal{R}_\text{normdif} + c^2$.
We compare our framework, \cref{eq:adaptive_train,eq:adaptive_test_optim}, with inductive learning, \cref{eq:inductive}, and transductive learning, \cref{eq:transductive}.
We vary the value of $\lambda$ from $0.001$ to $0.1$.
\Cref{fig:toy1_performance} reports the squared error, $\mathcal{L}$, and the regularizer, $\mathcal{R}$, computed on a test set for each configuration.
Our framework achieves small values of $\mathcal{R}$ practically for any value of $\lambda$; this is not the case for inductive and transductive learning.
The slightly large $\mathcal{L}$ for the proposed framework with $\lambda=0.1$ is probably because the training of the adaptive $f_\D$ was inhibited by the large value of $\lambda$.
This result supports the use of a small value of $\lambda$ in the optimization of \cref{eq:adaptive_train}.
We also confirmed that our framework worked well with $\lambda=0$.

\section{RELATED WORK}
\label{related}

\paragraph{Deep Grey-box Modeling}

Combination of deep neural nets and theory-driven models has been studied in various contexts \citep[e.g., recent studies include][]{yinAugmentingPhysicalModels2021,takeishiPhysicsintegratedVariationalAutoencoders2021,wehenkelRobustHybridLearning2022,reichsteinDeepLearningProcess2019,azariIncorporatingPhysicalKnowledge2020,bikmukhametovCombiningMachineLearning2020,eIntegratingMachineLearning2020,willardIntegratingPhysicsbasedModeling2020,karniadakisPhysicsinformedMachineLearning2021,vonruedenInformedMachineLearning2021,wangPhysicsguidedDeepLearning2021,konUnifyingModelbasedNeural2022}; we cannot enumerate more from a large number of studies in older times and in application domains, so readers are recommended consulting the references in the aforementioned papers.

Among most related studies, \citet{yinAugmentingPhysicalModels2021} formulate the learning of deep grey-box models as a constrained optimization problem where they minimize $\mathcal{R}$ under some constraint on $\mathcal{L}$ (e.g., $\mathcal{L}=0$).
\citet{takeishiPhysicsintegratedVariationalAutoencoders2021} study a family of deep grey-box models in the context of variational autoencoders.
In both of these studies, the theory- and data-driven models are learned once and together.
\citet{wehenkelRobustHybridLearning2022} proposes a method to deal with distribution shift utilizing deep grey-box models.
Their method is applicable to our framework, too.

\paragraph{Other Relevant Contexts}

When combining the models, a theory-driven model may constitute the final layer (in contrast, we are interested in the opposite, i.e., the data-driven model at the final layer).
\citet{arikInterpretableSequenceLearning2020} developed an epidemiological model whose parameters are predicted by neural nets.
Physics-informed neural nets \citep{raissiPhysicsinformedNeuralNetworks2019} and their variants belong also to this category because their loss, the residual of a differential equation, can be regarded as the final layer.
\citet{schnellHalfinverseGradientsPhysical2022} propose an optimization method for neural nets when the loss is computed based on some physics-based models.

\paragraph{Learning Schemes}

Our method can minimize a part of the objective at prediction time.
This is also the case with model-agnostic meta-learning (MAML) \citep{finnModelagnosticMetalearningFast2017}.
Differently from our setting, MAML is meant for settings where one has many tasks and wants to adapt to a new task at prediction time.
Meta-tailoring \citep{aletTailoringEncodingInductive2021} can be thought of as a variant of MAML with each sample being one task.
Nonetheless, these methods do not necessarily allow us to efficiently explore objectives' landscape.

\citet{seoControllingNeuralNetworks2021} proposed a method to learn deep neural nets with supervision from both data and rules.
They suggest ``marginalizing out'' a parameter during training so that we can choose it freely in the inference time.
Despite such similarities, there are several differences from our framework, e.g., in the target of the marginalization and in how theory is incorporated, which prevent direct comparison.


\section{EXPERIMENTS}
\label{exp}

We demonstrate the capability of the proposed framework to explore the landscape of regularizers.
We also compare the proposed framework with existing learning schemes when we take the option of point estimation.
The source codes of the experiments are available at \url{https://github.com/n-takeishi/deepgreybox}.

\subsection{Datasets, Models, and Optimization}

In all cases, $\mathcal{L}$ is the mean squared error.

\paragraph{Controlled Pendulum}

We use time-series data of a frictionless compound pendulum controlled by an unknown regulator.
The dataset comprises pairs $(x=s_t, y=(s_{t+1}, \dots, s_{t+10}))$, where $s_t = [s_{t,1}, s_{t,2}]$ is the state (angle $s_{t,1}$ and angular velocity $s_{t,2}$) of the pendulum.
Hence, the task is to predict the next 10 steps future given a state.

We use the following model: $\mathcal{C}(f_\T, f_\D; x) = \operatorname{ODESolve} [$ $\dot{s}_t = f_\T(s_t) + f_\D(s_t, \theta_T, f_\T(s_t)) \mid s_0 = x ]$, where $f_\T(s_t) = [ s_{t,2}, \frac{3\theta_\T}2 \sin(s_{t,1})]$, and $f_\D(s_t, \theta_\T, f_\T(s_t))$ is a network with fully-connected layers.
$f_\D$ should mimic the behavior of the unknown controller.
It is not obvious what $\mathcal{R}$ we should use because we do not know the controller's nature.
In \cref{eq:adaptive_train}, we set $\mathcal{R}=\mathcal{R}_\text{normD} + \mathcal{R}_\text{corr}$, where $\mathcal{R}_\text{normD}=\sum_x \Vert f_\D(x) \Vert_2^2$ and $\mathcal{R}_\text{corr} = \vert \sum_x f_\T(x) \cdot f_\D(x) \vert$ are candidate $\mathcal{R}$s, with a small coefficient $\lambda=0.001$.

\paragraph{Reaction--Diffusion System}

We generated data from the two-dimensional two-component reaction--diffusion system of the FitzHugh--Nagumo type: $\partial u / \partial t = 0.0015 \Delta u + u - u^3 - v + 0.005$ and $\partial v / \partial t = 0.005 \Delta v + u - v$, where $u,v \in \mathbb{R}^{32 \times 32}$ are the concentration of two substances.
The dataset comprises pairs $(x=s_0, y=(s_{1},s_{2},\dots,s_{15}))$ where $s_t=[u_t, v_t]$.
We note that similar configurations of the same system have been used in previous related studies \citep{yinAugmentingPhysicalModels2021,wehenkelRobustHybridLearning2022}.

We use a grey-box neural ODE as in the previous example, with $f_\T(s_t) = [a \Delta u_t, b \Delta v_t]$ whose parameter is $\theta_\T = [a,b]$ and $f_\D(s_t, \theta_\T) = \operatorname{ConvNet}_{\theta_\T}(s_t)$, a convolutional net whose filter is partially parameterized by $\theta_\T$.
In the previous studies \citep{yinAugmentingPhysicalModels2021,wehenkelRobustHybridLearning2022}, it has been reported that $\mathcal{R}_\text{normD}=\sum_x \Vert f_\D(x) \Vert_2^2$ works to some extent, yet without detailed analysis of its behavior.
Hence, we use $\mathcal{R}=\mathcal{R}_\text{normD} + \mathcal{R}_\text{corr}$ with a small value of $\lambda$ to examine the both regularizers as candidates.

\paragraph{Predator--Prey System}

We use real data of a planktonic predator--prey system \citep{blasiusLongtermCyclicPersistence2020}.
From the original data, we extracted the measurements of the population densities of the prey (unicellular algae) and the predator (rotifer).
We split the original time-series into subsequences of length $11$ [days] and created a dataset comprising $x=y=(s_t, \dots, s_{t+10})$ where $s_t = [s_{t,1}, s_{t,2}] \in \mathbb{R}^2$ is the state (prey density $s_{t,1}$ and predator density $s_{t,2}$) at day $t$.
Hence, the task is autoencoding.

The model comprises a decoder and two encoders.
The decoder is again a grey-box neural ODE: $\mathcal{C}(f_\T, f_\D; x) = \operatorname{ODESolve} \left[ \dot{s}_t = f_\T(s_t) + f_\D(z,\! s_t,\! \theta_T,\! f_\T(s_t)) \mid s_0 = x_0 \right]$, where $z$ is a latent variable.
$f_\T$ is the Lotka--Volterra equations: $f_\T(s_t) = [\alpha s_{t,1} - \beta s_{t,1} s_{t,2}, -\gamma s_{t,2} + \delta s_{t,1} s_{t,2}]$, where $\theta_\T = [\alpha, \beta, \gamma, \delta]$ ($\alpha$: prey's growth rate,  $\beta$: prey's decay rate, $\gamma$: predator's decay rate, and $\delta$: predator's growth rate).
One of the encoders is a neural net with fully-connected layers that takes $x$ as input and outputs $z$.
We run \cref{eq:adaptive_train} with $\mathcal{R}=\sum_x \Vert f_\D(x) \Vert_2^2$ and $\lambda=0.001$ to explore the $\mathcal{R}$'s landscape.
Then, we run \cref{eq:adaptive_test_optim_encoder} to get another encoder that infers $\theta_\T$ for each $x$.


\subsection{Explore the Landscape of \texorpdfstring{$\mathcal{R}$}{R}}

\begin{figure}[t]
    \begin{minipage}[t]{0.32\linewidth}
        \vspace{0pt}\centering
        \begin{tikzpicture}
            \begin{axis}[
                width = 4.5cm,
                height = 4.0cm,
                xlabel = {$\theta_\T$},
                ylabel = {$\mathcal{R}_\text{normD}$},
            ]
                \addplot[mybrown, very thick] table[x=thT, y=normD] {plots_new/pendulum_Rmap.txt};
            \end{axis}
        \end{tikzpicture}
    \end{minipage}
    \hfill
    \begin{minipage}[t]{0.32\linewidth}
        \vspace{0pt}\centering
        \begin{tikzpicture}
            \begin{axis}[
                width = 4.5cm,
                height = 4.0cm,
                xlabel = {$\theta_\T$},
                ylabel = {$\mathcal{R}_\text{corr}$},
            ]
                \addplot[mybrown, very thick] table[x=thT, y=abs_dotTD] {plots_new/pendulum_Rmap.txt};
            \end{axis}
        \end{tikzpicture}
    \end{minipage}
    \hfill
    \begin{minipage}[t]{0.32\linewidth}
        \vspace{0pt}\centering
        \begin{tikzpicture}
            \begin{axis}[
                width = 4.5cm,
                height = 4.0cm,
                xlabel = {$\theta_\T$},
                ylabel = {NRMSE [\%]},
                ymin=0, ymax=0.5,
            ]
                \addplot[mybrown, very thick] table[x=thT, y=nrmse_prc] {plots_new/pendulum_Rmap.txt};
            \end{axis}
        \end{tikzpicture}
    \end{minipage}
    \caption{(\emph{Left} \& \emph{center}) Landscapes of the regularization terms, $\mathcal{R}_\text{normD}$ and $\mathcal{R}_\text{corr}$ computed on the test set of the \textbf{controlled pendulum} dataset. (\emph{Right}) Test NRMSEs.}
    \label{fig:pendulum_Rmap}
\end{figure}
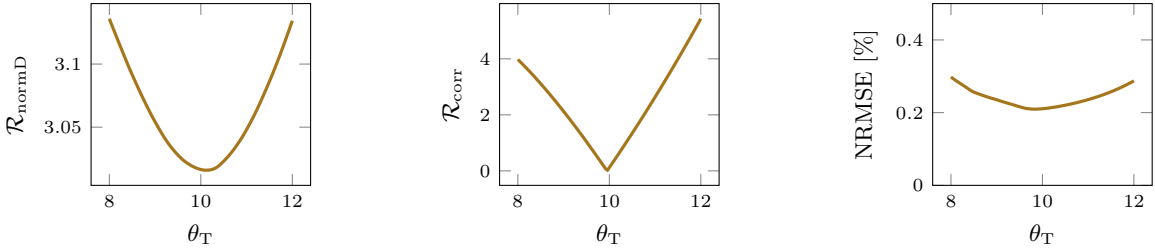

\Cref{fig:pendulum_Rmap} shows, for the \textbf{controlled pendulum} dataset, the values of $\mathcal{R}_\text{normD}$ and $\mathcal{R}_\text{corr}$ for different $\theta_\T$s.
We can see that the two regularizers have slightly different peaks.
While it is not easy to decide which one is better in some sense, it is notable that our method can provide such empirical clues on the differences between the regularizers.
The rightmost plot of \cref{fig:pendulum_Rmap} shows the normalized root mean squared errors (NRMSEs) of the prediction on the test set.
The model can predict well regardless of the value of $\theta_\T$.

\begin{figure*}[t]
    \centering
    \setlength{\tabcolsep}{1pt}
    {\fontsize{9pt}{10pt}\selectfont\begin{tabular}{ccccc}
        \hspace{8pt}$\mathcal{R}_\text{normD}$ &
        \hspace{8pt}$\mathcal{R}_\text{corr}$ &
        \hspace{8pt}$\mathcal{R}_\text{normD} + \mathcal{R}_\text{corr}$ &
        \hspace{8pt}$\mathcal{R}_\text{normD} \cdot \mathcal{R}_\text{corr}$ &
        \hspace{0pt}RMSE
        \\
        \includegraphics[clip,height=26mm]{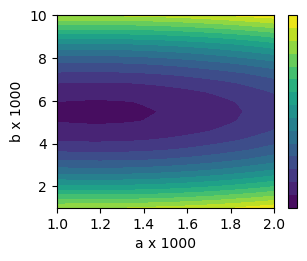} &
        \includegraphics[clip,height=26mm]{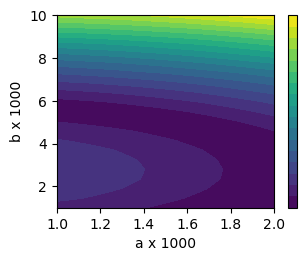} &
        \includegraphics[clip,height=26mm]{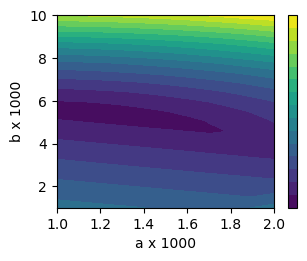} &
        \includegraphics[clip,height=26mm]{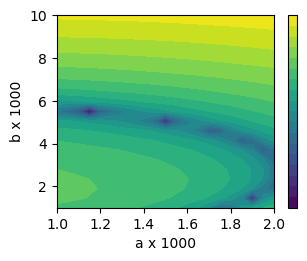} &
        \includegraphics[clip,height=26mm]{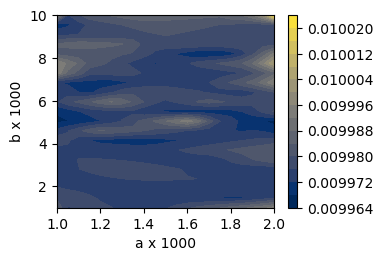}
    \end{tabular}}
    \caption{(\emph{Left four}) Landscapes of $\mathcal{R}_\text{normD}$, $\mathcal{R}_\text{corr}$, and their combinations, for the \textbf{reaction--diffusion system} dataset. The horizontal and vertical axes correspond to $a$ and $b$ of $f_\T$, respectively. (\emph{Right}) Test RMSEs.}
    \label{fig:reaction-diffusion_Rmap}
\end{figure*}

\Cref{fig:reaction-diffusion_Rmap} visualizes the $\mathcal{R}$ candidates and the test RMSE for the \textbf{reaction--diffusion system} dataset, analogously to what was explained in the previous paragraph.
$\mathcal{R}_\text{normD}$ has a peak at some value, while the extrema of $\mathcal{R}_\text{corr}$ are much less clear.
We found that the product of these two terms, $\mathcal{R}_\text{normD} \cdot \mathcal{R}_\text{corr}$ , also have clear, if not unique, peaks.
Interestingly, one of the local minima of $\mathcal{R}_\text{normD} \cdot \mathcal{R}_\text{corr}$ nicely points the data-generating value of $\theta_\T=[a,b]$, that is, $[a,b]=[0.0015,0.005]$.
On the other hand, $\mathcal{R}_\text{normD}$, which has been used in the previous studies, does not point to this value as the minimum.
This suggests that, again, we should empirically analyze $\mathcal{R}$ to know how different $\mathcal{R}$ may result in different $\theta_\T$ estimations, instead of optimizing it uncritically.
We are \emph{not} suggesting $\mathcal{R}_\text{normD} \cdot \mathcal{R}_\text{corr}$ is a ``better'' regularizer; we never know the truth of $\theta_\T$ in practice and thus cannot evaluate anything from such a viewpoint.

\input{figs_predator-prey}

\Cref{fig:predator-prey} (left) shows an outcome of the estimated model on a test sequence of the \textbf{predator--prey system} dataset.
We took a sliding window of length $11$ from the long test sequence, applied the model to the subsequences from the window, and calculated the averages of the outputs for the overlapping steps.
The value of $\Vert f_\D(x) \Vert_2^2$ at each time is informative to assess how well the $f_\T$ could explain the data.
We can observe that the latter half of the sequence is relatively well explained by $f_\T$.
We can also analyze the landscape of the regularizer, $\mathcal{R}=\sum_x \Vert f_\D(x) \Vert_2^2$.
\Cref{fig:predator-prey} (right) shows the slices of the values of the $\mathcal{R}$.
Each slice is computed by fixing the values of the remaining elements of $\theta_\T$ at the encoder's outputs.
An observation is that while $\mathcal{R}$ seems to have some extrema along the directions of $\beta$, $\gamma$, $\delta$, it looks quite flat along the direction of $\alpha$.
It implies the possible difficulty of inferring $\alpha$, prey's growth rate without interaction, from the data, and thus we should not immediately interpret at least the results of $\alpha$'s inference.


\subsection{Point Estimation of \texorpdfstring{$\theta_\T$}{θT}}

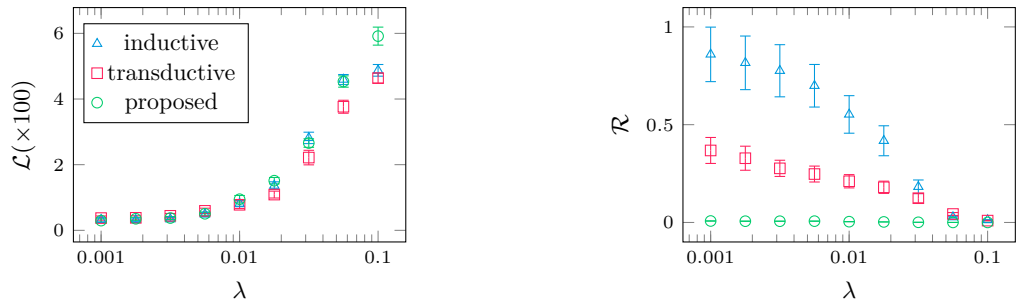
\begin{figure}[t]
    \centering
    \begin{minipage}[t]{0.48\linewidth}
        \vspace{0pt}\centering
        \begin{tikzpicture}
            \begin{axis}[
                xmode = log,
                log ticks with fixed point,
                width = 6.0cm,
                height = 4.7cm,
                xlabel = {$\lambda$},
                ylabel = {$\mathcal{L} (\times 100)$},
                legend style={at={(0.03,0.72)}, anchor=west, nodes={scale=0.9, transform shape}},
            ]
              \addplot+[myblue, mark=triangle, only marks] plot[error bars/.cd, y dir=both, y explicit] table[x=coeff, y=Lmeanx100, y error=Lstderrx100] {plots_new/stats_pendulum_inductive.txt};
              \addplot+[myred, mark=square, only marks] plot[error bars/.cd, y dir=both, y explicit] table[x=coeff, y=Lmeanx100, y error=Lstderrx100] {plots_new/stats_pendulum_transductive.txt};
              \addplot+[mygreen, mark=o, only marks] plot[error bars/.cd, y dir=both, y explicit] table[x=coeff, y=Lmeanx100, y error=Lstderrx100] {plots_new/stats_pendulum_adaptive.txt};
              \legend{inductive, transductive, proposed}
            \end{axis}
        \end{tikzpicture}
    \end{minipage}
    \begin{minipage}[t]{0.48\linewidth}
        \vspace{0pt}\centering
        \begin{tikzpicture}
            \begin{axis}[
                xmode = log,
                log ticks with fixed point,
                width = 6.0cm,
                height = 4.7cm,
                xlabel = {$\lambda$},
                ylabel = {$\mathcal{R}$},
            ]
              \addplot+[myblue, mark=triangle, only marks] plot[error bars/.cd, y dir=both, y explicit] table[x=coeff, y=Rmean, y error=Rstderr] {plots_new/stats_pendulum_inductive.txt};
              \addplot+[myred, mark=square, only marks] plot[error bars/.cd, y dir=both, y explicit] table[x=coeff, y=Rmean, y error=Rstderr] {plots_new/stats_pendulum_transductive.txt};
              \addplot+[mygreen, mark=o, only marks] plot[error bars/.cd, y dir=both, y explicit] table[x=coeff, y=Rmean, y error=Rstderr] {plots_new/stats_pendulum_adaptive.txt};
            \end{axis}
        \end{tikzpicture}
    \end{minipage}
    \caption{Loss $\mathcal{L}$ and regularizer $\mathcal{R}$ on test set of the \textbf{controlled pendulum} dataset. Error bars show the standard errors by 20 random trials.}
    \label{fig:pendulum_performance}
\end{figure}

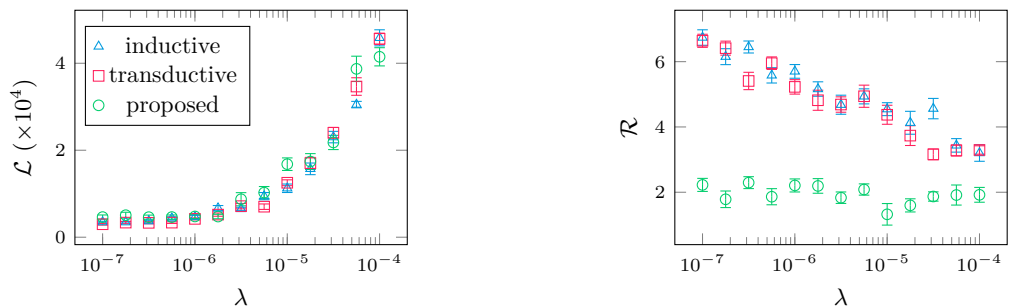
\begin{figure}
    \begin{minipage}[t]{0.50\linewidth}
        \vspace{0pt}\centering
        \begin{tikzpicture}
            \begin{axis}[
                xmode = log,
                log ticks with fixed point,
                width = 6.0cm,
                height = 4.7cm,
                xlabel = {$\lambda$},
                ylabel = {$\mathcal{L}$ ($\times 10^4$)},
                xtick = {0.0000001,0.000001,0.00001,0.0001},
                xticklabels = {$10^{-7}$,$10^{-6}$,$10^{-5}$,$10^{-4}$},
                legend style={at={(0.03,0.72)}, anchor=west, nodes={scale=0.9, transform shape}},
            ]
              \addplot+[myblue, mark=triangle, only marks] plot[error bars/.cd, y dir=both, y dir=both, y explicit] table[x=coeff, y=Lmeanx10000, y error=Lstderrx10000] {plots_new/stats_reaction-diffusion_inductive.txt};
              \addplot+[myred, mark=square, only marks] plot[error bars/.cd, y dir=both, y explicit] table[x=coeff, y=Lmeanx10000, y error=Lstderrx10000] {plots_new/stats_reaction-diffusion_transductive.txt};
              \addplot+[mygreen, mark=o, only marks] plot[error bars/.cd, y dir=both, y explicit] table[x=coeff, y=Lmeanx10000, y error=Lstderrx10000] {plots_new/stats_reaction-diffusion_adaptive.txt};
              \legend{inductive, transductive, proposed}
            \end{axis}
        \end{tikzpicture}
    \end{minipage}
    \begin{minipage}[t]{0.46\linewidth}
        \vspace{0pt}\centering
        \begin{tikzpicture}
            \begin{axis}[
                xmode = log,
                log ticks with fixed point,
                width = 6.0cm,
                height = 4.7cm,
                xlabel = {$\lambda$},
                ylabel = {$\mathcal{R}$},
                xtick = {0.0000001,0.000001,0.00001,0.0001},
                xticklabels = {$10^{-7}$,$10^{-6}$,$10^{-5}$,$10^{-4}$}
            ]
              \addplot+[myblue, mark=triangle, only marks] plot[error bars/.cd, y dir=both, y dir=both, y explicit] table[x=coeff, y=Rmean, y error=Rstderr] {plots_new/stats_reaction-diffusion_inductive.txt};
              \addplot+[myred, mark=square, only marks] plot[error bars/.cd, y dir=both, y explicit] table[x=coeff, y=Rmean, y error=Rstderr] {plots_new/stats_reaction-diffusion_transductive.txt};
              \addplot+[mygreen, mark=o, only marks] plot[error bars/.cd, y dir=both, y explicit] table[x=coeff, y=Rmean, y error=Rstderr] {plots_new/stats_reaction-diffusion_adaptive.txt};
            \end{axis}
        \end{tikzpicture}
    \end{minipage}
    \caption{Loss $\mathcal{L}$ and regularizer $\mathcal{R}$ on test set of the \textbf{reaction--diffusion} system dataset. Error bars show the standard errors by 20 random trials.}
    \label{fig:reaction-diffusion_performance}
\end{figure}

We performed one of the further options in the proposed framework.
For the \textbf{controlled pendulum} dataset, we set the regularizer as $\mathcal{R}=\mathcal{R}_\text{corr}$ and performed the point estimation by \cref{eq:inductive} (inductive), \cref{eq:transductive} (transductive), and \cref{eq:adaptive_test_optim} (proposed).
\Cref{fig:pendulum_performance} shows the values of $\mathcal{L}$ and $\mathcal{R}$ on the test set.
$\mathcal{L}$ is not well minimized with a large $\lambda$ for all the methods, which is natural.
A notable difference is that by the proposed method, $\mathcal{R}$ is well minimized even with small $\lambda$, though it is not surprising because the optimization is decoupled.
In contrast, the inductive and transductive learning schemes result in larger $\mathcal{R}$ values with small $\lambda$s, due to which striking a good value of $\lambda$ may be difficult for these learning schemes.
\Cref{fig:reaction-diffusion_performance} reports basically the same thing but for the \textbf{reaction--diffusion system} dataset.

\section{CONCLUSION}

Deep grey-box models are combinations of deep neural networks and theory-driven models.
We argued that, toward trustworthy estimation of the theory-driven part of the model, we should empirically analyze the regularizers we use.
We suggested a slight modification of the network architecture and the optimization objective used in deep grey-box models such that we can empirically analyze regularizers after training.
The proposed formulation is also useful as it decouples the training of the data-driven and theory-driven models.
The main limitation is the applicability to theory-driven models with high-dimensional parameters.
They pose issues in two aspects: computation of the expectation with regard to the theory parameters and visualization of a regularizer's landscape.

\subsubsection*{Acknowledgements}
This work was supported by the Innosuisse project \emph{Industrial artificial intelligence for intelligent machines and manufacturing digitalization} (39453.1 IP-ICT) and the Swiss National Science Foundation Sinergia project \emph{Modeling pathological gait resulting from motor impairments} (CRSII5\_177179).

\setlength\bibitemsep{0.55\baselineskip}
{
\printbibliography
}

\appendix

\counterwithin{figure}{section}


\section{ADDITIONAL INFORMATION ON EXPERIMENTS}

\subsection{Common}

\paragraph{Optimization}

In all the experiments including the one in \cref{method:example}, the objective of \cref{eq:adaptive_train} and its gradients were estimated using only one sample of $\theta_\T$ per an instance of $x$, that is, $m=1$.
We also tried larger $m$ up to $m=100$ per a single $x$, but the final improvement of performance was marginal.
We used the \texttt{AdamW} optimizer for the optimization of inductive learning, \cref{eq:inductive}, transductive learning, \cref{eq:transductive}, and the training-time optimization of the proposed method, \cref{eq:adaptive_train}.
The parameters of the optimizer were set to the default values of the library unless stated otherwise.

\paragraph{Others}

\begin{itemize}[itemsep=0pt,topsep=0pt]
    \item All implementations were done with \texttt{PyTorch 1.11.0}.
    \item The NRMSEs were computed by dividing RMSEs by $y_\text{max} - y_\text{min}$.
\end{itemize}


\subsection{Toy Dataset (in \cref{method:example})}

\paragraph{Data}

In the example in \cref{method:example}, we generated the data by sampling $x$ from the uniform distribution on $\mathcal{X}=[-\pi,\pi]$.
We created training, validation, and test datasets, each of which was with $40$ samples of $(x,y)$.

\paragraph{Model}

In the model, $f_\D$ is a neural net with fully-connected layers (with two hidden layers of size $16$) and the leaky ReLU activation function.

\paragraph{Optimization}

We optimized the model parameters ($\theta_\T=[a,c]$ and $\theta_\D$ in the inductive and transductive learning schemes, \cref{eq:inductive,eq:transductive}; only $\theta_\D$ in the proposed method, \cref{eq:adaptive_train}) with learning rate varied from $0.01$ to $0.0001$ exponentially.
We set the mini-batch size to $10$ and ran the optimization for $2000$ epochs.
In the inductive and transductive learning schemes, the value of $\theta_\T=[a,c]$ was constrained to be $0 \leq a \leq 2$ and $-\pi \leq c \leq \pi$.
In the training-time optimization of the proposed method, \cref{eq:adaptive_train}, we sampled $\theta_\T=[a,c]$ from the uniform distribution on $0 \leq a \leq 2$ and $-\pi \leq c \leq \pi$.
The prediction-time optimization of the proposed method, \cref{eq:adaptive_test_optim}, was performed with the Adam optimizer (without weight decay) with learning rate varied from $0.01$ to $0.0001$.
We ran it for $2000$ epochs with the full batch.

\paragraph{Test prediction errors}

\begin{figure}[t]
    \centering
    \includegraphics[clip,height=4cm]{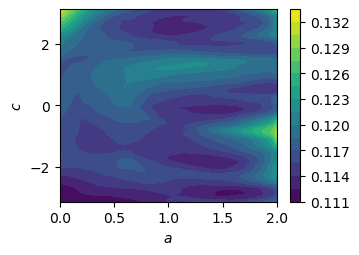}
    \caption{Test RMSEs in the result reported in \cref{fig:toy1_Rmap}.}
    \label{fig:toy1_L}
\end{figure}

In \cref{fig:toy1_Rmap} of the main text, we had no room to show the value of $\mathcal{L}$ evaluated on the test set for the same range of $\theta_\T$ as that in the plots of \cref{fig:toy1_Rmap}.
We report it here in \cref{fig:toy1_L}.
Although some extreme values of $\theta_\T=[a,c]$ tend to have slightly large $\mathcal{L}$, we can see that the model predicts similarly well with different $\theta_\T$s.


\subsection{Controlled Pendulum}

\paragraph{Data}

We used the dataset available\footnote{\url{https://github.com/Stable-Baselines-Team/stable-baselines/blob/f877c85b6e45084da1d8ccf73e7e730dc2001c3f/stable_baselines/gail/dataset/expert_pendulum.npz}} on the GitHub repository of the \texttt{Stable Baselines} package\footnote{\url{https://github.com/Stable-Baselines-Team/stable-baselines/}}.
It is a collection of sequences measuring the state of a frictionless compound pendulum, with a rigid uniform rod of length $1$.
The gravitational acceleration is $g=10$ in the data-generation environment.
In each sequence, the pendulum starts at a random position and is controlled to stay vertically upright by some controller.
The detail of the controller does not matter, so we just leave it unknown.
The original dataset comprises sequences of length $200$.
We disposed of the last $100$ steps of each sequence because most of the time, the pendulum is already almost stabilized at the goal state by that time.
We ran a sliding window of size $11$ on the remaining sequences and created pairs $(x,y)$ with $x$ being the first snapshot in the window and $y$ being the last $10$ snapshots in the window.
We split the original dataset into training, validation, and test sets of size $3600$, $2700$, and $2700$, respectively.

\paragraph{Model}

In the model, the neural net, $\operatorname{MLP}(t, s_t, \theta_\T, f_\T(s_t))$, is a feed-forward network with fully-connected layers with three hidden layers of size $128$.
It takes the concatenation of $s_t$, $\theta_\T$, and $f_\T(s_t)$ as the input (without using $t$ directly).
We used the \texttt{torchdiffeq} library\footnote{\url{https://github.com/rtqichen/torchdiffeq/}} for the numerical integration of the ODE with the 4th-order Runge--Kutta method.

\paragraph{Optimization}

In training, we varied the learning rate from $0.001$ to $0.00001$ exponentially, set the mini-batch size to $50$, and ran the optimization for $500$ epochs.
In the inductive and transductive learning schemes, the value of $\theta_\T$ was constrained to be in $[8,12]$.
In the training-time optimization of the proposed method, \cref{eq:adaptive_train}, we sampled $\theta_\T$ from the uniform distribution on $[8,12]$.
The prediction-time optimization of the proposed method, \cref{eq:adaptive_test_optim}, was performed simply by the grid search because $\theta_\T$ is one-dimensional.


\subsection{Reaction--Diffusion System}

\paragraph{Data}

We generated the data from the two-dimensional two-component reaction--diffusion system of the FitzHugh--Nagumo type:
\begin{align*}
    \frac{\partial u}{\partial t} &= 0.0015 \Delta u + u - u^3 - v + 0.005,
    \\
    \frac{\partial v}{\partial t} &= 0.005 \Delta v + u - v.
\end{align*}
The two variables of the system, $u$ and $v$, are defined over the two-dimensional space spanning $[-1,1] \times [-1,1]$.
We discretized the space with the $32 \times 32$ even grid.
In generating the data and in the trained model, the Laplacian operator, $\Delta$, was computed using the five-point stencil.
We generated $1000$ sequences of length $16$ from $t=0$ to $t=1.5$ using the 4th-order Runge--Kutta method with step size $0.001$.
We prepared training, validation, and test sets of size $400$, $300$, and $300$, respectively.

\paragraph{Model}

In the model, the neural net, $\operatorname{ConvNet}_{\theta_\T}(s_t)$, returns the sum of the outputs of two subnetworks.
One of them is a convolutional network with two hidden layers having $16$ channels and with the leaky ReLU activation function.
Another subnetwork is a two-layer convolutional network whose filters' weights are the discrete Laplacian operator (with the five-point stencil) multiplied by scalars for each filter.
Those scalars are the output of a feed-forward network that takes $\theta_\T$ as input.
It has two hidden layers of size $128$ with the leaky ReLU activation function applied to the intermediate layers and the hyperbolic tangent applied to the final layer.
When the output of this network is below $-\theta_\T/2$, it is clipped to be $-\theta_\T/2$ so that the second subnetwork does not completely cancel the output of $f_\T$.

\paragraph{Optimization}

In training, we fixed the learning rate at $0.001$, set the mini-batch size to $20$, and ran the optimization for $1000$ epochs.
In the inductive and transductive learning schemes, the value of $\theta_\T=[a,b]$ was constrained to be in $[0.001, 0.002] \times [0.001, 0.01]$.
In the training-time optimization of the proposed method, \cref{eq:adaptive_train}, we sampled $\theta_\T$ from the uniform distribution on $[0.001, 0.002] \times [0.001, 0.01]$.
The prediction-time optimization of the proposed method, \cref{eq:adaptive_test_optim}, was performed with the Adam optimizer (without weight decay) with learning rate $0.001$.
We ran it for $100$ epochs with the full batch.


\subsection{Predator--Prey System}

\paragraph{Data}

We used the data available online\footnote{\url{https://doi.org/10.6084/m9.figshare.10045976.v1}}, which contain measurements of the population densities of the prey (unicellular algae) and the predator, (rotifer).
In order to adjust the scale of the data values, we roughly normalized the data by multiplying $0.5$ and $0.02$ by the population densities of the two species, respectively.
We excluded the part of data containing a value larger than $7$ (after the normalization) from the training and validation sets; the test set might include such parts, but we did not check it.

\paragraph{Model}

The model has three neural networks in total: $f_\D$, the encoder of $z$, and the encoder of $\theta_\T$.
These networks have the same architecture: three hidden layers of size $128$ and the leaky ReLU activation function.

\paragraph{Optimization}

In training, we fixed the learning rate at $0.001$, set the mini-batch size to $100$, and ran the optimization for $500$ epochs.
In the inductive and transductive learning schemes, the value of $\theta_\T=[\alpha,\beta,\gamma,\delta]$ was constrained to be in $[0, 1.5] \times [0, 3] \times [0, 1.5] \times [0, 3]$.
In the training-time optimization of the proposed method, \cref{eq:adaptive_train}, we sampled $\theta_\T$ from the uniform distribution on $[0, 1.5] \times [0, 3] \times [0, 1.5] \times [0, 3]$.
The prediction-time optimization of the proposed method, \cref{eq:adaptive_test_optim}, was performed with the Adam optimizer (without weight decay) with learning rate $0.001$.
We ran it for $50$ epochs with the mini-batch of size $100$.

\end{document}